\def\eqref#1{equation~\ref{#1}}
\def\1{\bm{1}}
\DeclareMathAlphabet{\mathsfit}{\encodingdefault}{\sfdefault}{m}{sl}
\SetMathAlphabet{\mathsfit}{bold}{\encodingdefault}{\sfdefault}{bx}{n}
\newcommand{\E}{\mathbb{E}}
\newcommand{\Var}{\mathrm{Var}}
\DeclareMathOperator*{\argmin}{arg\,min}
\newcommand{\ldot}[2]{\ensuremath{\left\langle #1, #2 \right\rangle}}
\newcommand{\rrloss}{\ensuremath{\mathrm{RRloss}}}
\newcommand{\regloss}{\ensuremath{\mathrm{REGloss}}}
\newcommand{\tmleloss}{\ensuremath{\mathrm{TMLEloss}}}
\newcommand{\RR}{\ensuremath{\mathbb{R}}}
\newcommand{\der}[2]{\frac{\partial #1}{\partial #2}}
\newcommand{\child}{\ensuremath{\mathrm{child}}}
\newcommand{\wt}{\widehat{\theta}}
\theoremstyle{plain}
\newtheorem{theorem}{Theorem}[section]
\newtheorem{lemma}[theorem]{Lemma}
\theoremstyle{definition}
\newtheorem{example}[theorem]{Example}
\theoremstyle{remark}
\icmltitlerunning{RieszNet and ForestRiesz: Auto-DML with Neural Nets and Random Forests}
\begin{document}

\twocolumn[
\icmltitle{RieszNet and ForestRiesz:
Automatic Debiased \\ Machine Learning with Neural Nets and Random Forests}



\icmlsetsymbol{equal}{*}

\begin{icmlauthorlist}
\icmlauthor{Victor Chernozhukov}{mit}
\icmlauthor{Whitney Newey}{mit}
\icmlauthor{Víctor Quintas-Martínez}{mit}
\icmlauthor{Vasilis Syrgkanis}{ms}
\end{icmlauthorlist}

\icmlaffiliation{mit}{Department of Economics, Massachusetts Institute of Technology, Cambridge MA, USA}
\icmlaffiliation{ms}{Microsoft Research New England, Cambridge MA, USA}

\icmlcorrespondingauthor{Víctor Quintas-Martínez}{vquintas@mit.edu}

\icmlkeywords{Machine Learning, ICML, Debiased Machine Learning, Causal Inference, Off-policy Evaluation, Interpretable Machine Learning, Riesz Representers, Neural Networks, Random Forests}

\vskip 0.3in
]



\printAffiliationsAndNotice{\icmlEqualContribution} 

\begin{abstract}
Many causal and policy effects of interest are defined by linear functionals of high-dimensional or non-parametric regression functions. $\sqrt{n}$-consistent and asymptotically normal estimation of the object of interest requires debiasing to reduce the effects of regularization and/or model selection on the object of interest. Debiasing is typically achieved by adding a correction term to the plug-in estimator of the functional, which leads to properties such as semi-parametric efficiency, double robustness, and Neyman orthogonality. We implement an automatic debiasing procedure based on automatically learning the Riesz representation of the linear functional using Neural Nets and Random Forests. Our method only relies on black-box evaluation oracle access to the linear functional and does not require knowledge of its analytic form. We propose a multitasking Neural Net debiasing method with stochastic gradient descent minimization of a combined Riesz representer and regression loss, while sharing representation layers for the two functions. We also propose a Random Forest method which learns a locally linear representation of the Riesz function. Even though our method applies to arbitrary functionals, we experimentally find that it performs well compared to the state of art neural net based algorithm of \citet{shi2019adapting} for the case of the average treatment effect functional. We also evaluate our method on the problem of estimating average marginal effects with continuous treatments, using semi-synthetic data of gasoline price changes on gasoline demand. Code available at \href{https://github.com/victor5as/RieszLearning}{github.com/victor5as/RieszLearning}.
\end{abstract}

\section{Introduction}

A large number of problems in causal inference, off-policy evaluation and optimization and interpretable machine learning can be viewed as estimating the average value of a moment functional that depends on an unknown regression function:
\begin{equation}
    \theta_0 = \E[m(W; g_0)],
\end{equation}
where $W := (Y, Z)$ and $g_0(Z) := \E[Y \mid Z]$. In most cases, $Y$ will be the outcome of interest, and inputs $Z = (T,X)$ will include a binary or continuous treatment $T$ and covariates $X$. Prototypical examples include the estimation of average treatment effects, average policy effects, average derivatives and incremental policy effects.

\begin{example}[Average treatment effect] \label{ex1} Here $Z=(T,X)$ where $T$ is a binary treatment indicator, and $X$ are covariates. The object of interest is:
\begin{align*}
    \theta_0 = \E[g_0(1,X)-g_0(0,X)] 
\end{align*}
If $Y = T\cdot Y(1) + (1 - T) \cdot Y(0)$, where potential outcomes $(Y(1), Y(0))$ are conditionally independent of treatment $T$ given covariates $X$, then this object is the average treatment effect \citep{rosenbaum1983central}.
\end{example}

\begin{example}[Average policy effect] \label{ex2} In the context of offline policy evaluation and optimization, our goal is to optimize over a space of assignment policies $\pi: X \to \{0,1\}$, when having access to observational data collected by some unknown treatment policy. The policy value can also be formulated as the average of a linear moment:
\small
\begin{align*}
    \theta_0 = \E[\pi(X) (g_0(1, X) - g_0(0, X)) + g_0(0, X)] 
\end{align*}
\normalsize
A long-line of prior work has considered doubly-robust approaches to optimizing over a space of candidate policies from observational data (see, e.g., \citealp{dudik2011doubly,athey2021policy}).
\end{example}

\begin{example}[Average marginal effect / derivative] \label{ex3} Here $Z=(T,X)$, where $T$ is a continuously distributed policy variable of interest, $X$ are covariates, and:
\begin{align*}
    \theta_0 = \E\left[\der{g_0(T,X)}{t}\right] 
\end{align*}
This object has received previous attention in the econometrics literature (e.g., \citealp{imbens2009identification}), and it is essentially the average slope in the partial dependence plot frequently used in work on interpretable machine learning (see, e.g., \citealp{zhao2021causal,Friedman2001,molnar2020interpretable}).
\end{example}

\begin{example}[Incremental policy effects] \label{ex4} Here $Z=(T,X)$, where $T$ is a continuously distributed policy variable of interest, $X$ are covariates, and $\pi: X\to [-1,1]$ is an incremental policy of infinitesimally increasing or descreasing the treatment from its baseline value (see, e.g., \citealp{athey2021policy}). The incremental value of such an infinitesimal policy change takes the form:
\begin{align*}
    \theta_0 = \E\left[\pi(X) \der{g_0(T,X)}{t}\right] 
\end{align*}
Such incremental policy effects can also be useful within the context of policy gradient algorithms in deep reinforcement learning, so as to take gradient steps towards a better policy, and debiasing techniques have already been used in that context \citep{grathwohl2017backpropagation}.
\end{example}

Even though the non-parametric regression function is typically estimable only at slower than parametric rates, one can often achieve parametric rates for the average moment functional. However, this is typically not achieved by simply pluging a non-parametric regression estimate into the moment formula and averaging, but requires debiasing approaches to reduce the effects of regularization and/or model selection when learning the non-parametric regression.

Typical debiasing techniques are tailored to the moment of interest. In this work we present automatic debiasing techniques that use the representation power of neural nets and random forests and which only require oracle access to the moment of interest. Our resulting average moment estimators are typically consistent at parametric $\sqrt{n}$ rates and are asymptotically normal, allowing for the construction of confidence intervals with approximately nominal coverage. The latter is essential in social science applications and can also prove useful in policy learning applications, so as to quantify the uncertainty of different policies and implement automated policy optimization algorithms which require uncertainty bounds (e.g., algorithms that use optimism in the face of uncertainty).

Relative to previous works in the automatically debiased ML (Auto-DML) literature, the contribution of this paper is twofold. On the one hand, we provide the first practical implementation of Auto-DML using neural networks (RieszNet) and random forests (ForestRiesz). As such, we complement the theoretical guarantees of \citet{chernozhukov2021automatic} for generic machine learners. On the other hand, we show that our methods perform better than existing benchmarks and that inference based on asymptotic confidence intervals obtains coverage close to nominal in two settings of great relevance in applied research: the average treatment effect of a binary treatment and the average marginal effect (derivative) of a continuous treatment.

The rest of the paper is structured as follows. \cref{sec:estimation} provides some background on estimation of average moments of regression functions. In \ref{sec:debiasing} we describe the form of the debiasing term, and in \ref{sec:riesz} we explain how it can be automatically estimated. Sections \ref{sec:riesznet} and \ref{sec:forestriesz} introduce our proposed estimators: RieszNet and ForestRiesz, respectively. Finally, in \cref{sec:experiments} we present our experimental results.

\section{Estimation of Average Moments of Regression Functions}\label{sec:estimation}

\subsection{Debiasing the Average Moment} \label{sec:debiasing}
We focus on problems where there exists a square-integrable random variable  $\alpha_0(Z)$  such that:
\begin{gather*}
    \E[m(W; g)]=\E[\alpha_0(Z)g(Z)], \\ \text{ for all $g$ with } \E[g(Z)^2]<\infty.
\end{gather*}
By the Riesz representation theorem, such an $\alpha_0(Z)$ exists if and only if $\E[m(W;g)]$ is a continuous linear functional of $g$. We will refer to $\alpha_0(Z)$ as the Riesz representer (RR). 

The RR exists in each of \crefrange{ex1}{ex4} under mild regularity conditions. For instance, in \cref{ex1} the RR is $\alpha_0(Z) = T/p_0(X) - (1-T)/(1-p_0(X))$ where $p_0(X)=\Pr(T=1 \mid X)$ is the propensity score, and in \cref{ex3}, integration by parts gives $\alpha_0(Z)=-\left(\partial f_0(T,X)/ \partial t\right)/f_0(Z)$ where $f_0(Z)$ is the joint probability density function (pdf) of $T$ and $Z$. In general, the RR involves (unknown) nonparametric functions of the data, such as the propensity score or the density $f_0(Z)$ and its derivative.

The RR is a crucial object in the debiased ML literature, since it allows us to construct a debiasing term for the moment functional $m(W; g)$ (see \citealp{chernozhukov2018double}, for details). The debiasing term in this case takes the form $\alpha(Z) (Y - g(Z))$. To see that, consider the score $m(W; g) + \alpha(Z)(Y-g(Z))-\theta$. It satisfies the following \textit{mixed bias property}:
\begin{multline*}
    \E[m(W; g) + \alpha(Z)(Y-g(Z)) - \theta_0] \\ =
    -\E[(\alpha(Z)-\alpha_0(Z))(g(Z)-g_0(Z))].
\end{multline*}
This property implies double robustness of the score.\footnote{Indeed, the score will be zero in expectation when either $\alpha(Z) = \alpha_0(Z)$ or $g(Z) = g_0(Z)$.The key property we rely on is the mixed bias property, and not double robustness.} 

A debiased machine learning estimator of $\theta_0$ can be constructed from this score and first-stage learners $\widehat{g}$ and $\widehat\alpha$. Let $\E_n[\cdot]$ denote the empirical expectation over a sample of size $n$, i.e., $\E_n[Z]=\frac{1}{n}\sum_{i=1}^n Z_i$. We consider:
\begin{align}
    \widehat\theta = \E_n\left[m(W; \widehat{g}) + \widehat\alpha(Z)(Y-\widehat{g}(Z))\right].
\end{align}

The mixed bias property implies that the bias of this estimator will vanish at a rate equal to the product of the mean-square convergence rates of $\widehat\alpha$ and $\widehat{g}$. Therefore, in cases where the regression function $g_0$ can be estimated very well, the rate requirements on $\widehat{\alpha}$ will be less strict, and vice versa. More notably, whenever the product of the mean-square convergence rates of $\widehat\alpha$ and $\widehat{g}$ is larger than $\sqrt{n}$, we have that $\sqrt{n} (\widehat{\theta} - \theta_0)$ converges in distribution to centered normal law $N(0, \E[\psi_0(W)^2])$, where $$\psi_0(W) := m(W; g_0) + \alpha_0(Z)(Y-g_0(Z))-\theta_0,$$ as proven formally in Theorem 4 of \citet{chernozhukov2021automatic}.  Results in \citet{newey1994asymptotic} and \citet{chernozhukov2018global} imply that $\E[\psi_0(W)^2]$ is a semiparametric efficient variance bound for $\theta_0$, and therefore the  estimator achieves this bound.

The regression estimator $\widehat{g}$ and the RR estimator $\widehat{\alpha}$ may use samples different than the $i$-th, which constitutes cross-fitting. Cross-fitting reduces bias from using the $i$-th sample in estimating $\alpha$ and $g$. We may also use
different samples to compute $\widehat{g}$ and $\widehat{\alpha}$, which constitutes double cross-fitting (see \citealp{newey2018cross} for the benefits of double cross-fitting).

\subsection{Riesz Representer as Minimizer of Stochastic Loss}\label{sec:riesz}
The theoretical foundation for this paper is the recent work of \citet{chernozhukov2021automatic}, who show that one can view the Riesz representer as the minimizer of the loss function:
\begin{align*}
\alpha_0 & = \argmin_\alpha \E[(\alpha(Z) - \alpha_0(Z))^2]
\\ & = \argmin_\alpha \E[\alpha(Z)^2 - 2\alpha_0(Z)\alpha(Z) + \alpha_0(Z)^2]
\\ & = \argmin_\alpha \E[\alpha(Z)^2 - 2m(W; \alpha)],
\end{align*}
(because $\E[\alpha_0(Z)^2]$ is a constant with respect to the minimizer) and hence consider an empirical estimate of the Riesz representer by minimizing the corresponding empirical loss within some hypothesis space ${\cal A}_n$:
\begin{equation}\label{eq:rieszloss}
    {\widehat{\alpha} = \argmin_{\alpha\in {\cal A}_n} \E_n[\alpha(Z)^2 - 2m(W; \alpha)]}
\end{equation}

The benefits of estimating the RR using this loss are twofold: (i) we do not need to derive an analytic form of the RR of the object of interest, (ii) we are trading-off bias and variance for the actual RR, since the loss is asymptotically equivalent to the square loss $\E[(\alpha_0(Z) - \alpha(Z))^2]$, as opposed to plug-in Riesz estimators that first solve some classification, regression or density estimation problem and then plug the resulting estimate into the analytic RR formula. This approach can lead to finite sample instabilities, for instance, in the case of binary treatment effects, when the propensity scores are close to 0 or 1 and they appear in the denominator of the RR. Prior work by \citet{chernozhukov2022automatic} optimized the loss function in \eqref{eq:rieszloss} over linear Riesz functions with a growing feature map and L1 regularization, while \citet{chernozhukov2020adversarial} allowed for the estimation of the RR in arbitrary function spaces, but proposed a computationally harder minimax loss formulation.

From a theoretical standpoint, \citet{chernozhukov2021automatic} also provide fast statistical estimation rates.
Let $\|\cdot\|_2$ denote the $\ell_2$ norm of a function of a random input, i.e., $\|a\|_2=\sqrt{\E[a(Z)^2]}$. We also let $\|\cdot\|_{\infty}$ denote the $\ell_{\infty}$ norm, i.e., $\|a\|_{\infty} = \max_{z\in {\cal Z}} |a(z)|$.

\vspace{0.5em}
\begin{theorem}[\citet{chernozhukov2021automatic}]\label{thm1}
Let $\delta_{n}$ be an upper bound on the critical radius \citep{wainwright2019high} of the function spaces:
\begin{gather*}
    \{z \mapsto \gamma\,(\alpha(z) - \alpha_0(z)): \alpha \in {\cal A}_n, \gamma \in [0,1]\} \notag \text{~and~} \\
    \{w \mapsto \gamma\,(m(w; \alpha)-m(w;\alpha_0)): \alpha \in {\cal A}_n, \gamma \in [0,1]\}, 
\end{gather*} and suppose that for all $f$ in the spaces above: $\|f\|_{\infty}\leq 1$. 
Suppose, furthermore, that $m$ satisfies the mean-squared continuity property:
\begin{align*} 
    \E[(m(W; \alpha) - m(W; \alpha'))^2] \leq M\, \|\alpha-\alpha'\|_2^2
\end{align*}
for all $\alpha,\alpha'\in {\cal A}_n$ and some $M\geq 1$. Then for some universal constant $C$, we have that w.p. $1-\zeta$:
\begin{align}
\begin{split}
    \|\widehat{\alpha}-\alpha_0\|_2^2 \leq C\bigg(& \delta_n^2\, M + \frac{M\log(1/\zeta)}{n} \\ & + \inf_{\alpha_*\in {\cal A}_n}\|\alpha_*-\alpha_0\|_2^2 \bigg)    
\end{split}
\end{align}
\end{theorem}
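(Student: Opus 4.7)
The proof follows the standard empirical-risk-minimization / localized-Rademacher template, specialized to the Riesz loss $L_n(\alpha) := \E_n[\alpha(Z)^2 - 2m(W;\alpha)]$ with population counterpart $L(\alpha) := \E[\alpha(Z)^2 - 2m(W;\alpha)]$. The key structural fact, which makes the argument essentially a least-squares argument, is that by linearity of $m$ in $\alpha$ combined with the Riesz identity $\E[m(W;\alpha)] = \E[\alpha_0(Z)\alpha(Z)]$, one has the exact quadratic decomposition $L(\alpha) - L(\alpha_0) = \|\alpha - \alpha_0\|_2^2$. Thus bounding excess risk is equivalent to bounding $\|\widehat{\alpha} - \alpha_0\|_2^2$.

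\textbf{Step 1 (basic inequality).} Pick any $\alpha_* \in {\cal A}_n$, to be chosen later as a near-minimizer of $\|\alpha - \alpha_0\|_2$ over ${\cal A}_n$. Since $\widehat{\alpha}$ minimizes $L_n$ on ${\cal A}_n$, we have $L_n(\widehat\alpha) \le L_n(\alpha_*)$. Adding $L(\widehat\alpha) - L(\alpha_*)$ to both sides and using the quadratic decomposition produces
$$\|\widehat\alpha - \alpha_0\|_2^2 \leq \|\alpha_* - \alpha_0\|_2^2 + \bigl[(L - L_n)(\widehat\alpha) - (L-L_n)(\alpha_*)\bigr].$$
Split the bracketed empirical-process gap as a sum of two centered processes, one indexed by the quadratic term $\alpha(Z)^2$ and one by the linear term $-2m(W;\alpha)$, and further decompose each around $\alpha_0$ (the centering point at which the critical-radius hypothesis is stated) via the triangle inequality $(L-L_n)[\widehat\alpha-\alpha_*] = (L-L_n)[\widehat\alpha-\alpha_0] - (L-L_n)[\alpha_*-\alpha_0]$.

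\textbf{Step 2 (critical-radius bound for each process).} Apply a standard localized-concentration inequality (e.g., Wainwright 2019, Thm.\ 14.1, or Foster \& Syrgkanis 2019, Lem.\ 11) to each of the two star-shaped classes in the hypothesis of the theorem. For the $m$-process, the mean-squared continuity assumption $\|m(\cdot;\alpha)-m(\cdot;\alpha_0)\|_2^2 \leq M\|\alpha-\alpha_0\|_2^2$ translates $L_2$-distance in $\alpha$ into $L_2$-variance of the empirical-process summands, so localization at the critical radius yields, on an event of probability at least $1-\zeta/2$ and uniformly over $\alpha \in {\cal A}_n$,
$$\bigl|(L-L_n)[2m(W;\alpha)-2m(W;\alpha_0)]\bigr| \,\le\, C\Bigl(\delta_n\sqrt{M}\,\|\alpha-\alpha_0\|_2 + \delta_n^2 M + \tfrac{M\log(1/\zeta)}{n}\Bigr).$$
For the quadratic process, write $\alpha(z)^2 - \alpha_0(z)^2 = (\alpha-\alpha_0)(z)\,(\alpha+\alpha_0)(z)$; the $\ell_\infty$ bound of $1$ on the centered class makes this effectively Lipschitz in $\alpha-\alpha_0$ with envelope $O(1)$, so the analogous inequality holds with $M$ replaced by a universal constant. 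Combining both processes at $\widehat\alpha$ and $\alpha_*$ and using the triangle inequality $\|\alpha_*-\alpha_0\|_2 \leq \|\alpha_*-\alpha_0\|_2$ (trivial) gives a total RHS bound of the form
$$C\Bigl(\delta_n\sqrt{M}\,\bigl(\|\widehat\alpha-\alpha_0\|_2 + \|\alpha_*-\alpha_0\|_2\bigr) + \delta_n^2 M + \tfrac{M\log(1/\zeta)}{n}\Bigr).$$

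\textbf{Step 3 (absorb and conclude).} Apply AM-GM to the cross term $\delta_n\sqrt{M}\,\|\widehat\alpha-\alpha_0\|_2 \leq \tfrac{1}{4}\|\widehat\alpha-\alpha_0\|_2^2 + \delta_n^2 M$, moving the quadratic piece to the LHS of Step 1, and bound $\delta_n\sqrt{M}\,\|\alpha_*-\alpha_0\|_2$ likewise by $\delta_n^2 M + \tfrac{1}{4}\|\alpha_*-\alpha_0\|_2^2$. Rearranging and infimizing over $\alpha_* \in {\cal A}_n$ produces the stated bound (with a larger universal constant $C$).

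\textbf{Main obstacle.} The delicate step is the variance calibration in Step 2: one must verify that the localized-Rademacher inequality, applied to the star-shaped class of rescaled differences (as the statement is phrased with $\gamma \in [0,1]$), gives control in the $\max(\|f\|_2,\delta_n)$ regime rather than only via the envelope, and that the mean-squared continuity is invoked to get the \emph{fast}-rate contribution $M\delta_n^2$ instead of the slow $\sqrt{M}\delta_n$. Getting the $M$-dependence right on both the $\delta_n^2$ and the $\log(1/\zeta)/n$ terms — and verifying that the approximation error $\inf_{\alpha_*}\|\alpha_*-\alpha_0\|_2^2$ enters linearly (not with a factor of $M$) — is what prevents the argument from being entirely mechanical.
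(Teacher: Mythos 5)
This theorem is imported verbatim from \citet{chernozhukov2021automatic}; the paper you are reading states it without proof, so there is no in-paper argument to compare against. Your reconstruction follows the same route as the cited source: the exact quadratic expansion $L(\alpha)-L(\alpha_0)=\|\alpha-\alpha_0\|_2^2$ of the population Riesz loss (via the Riesz identity $\E[m(W;\alpha)]=\E[\alpha_0(Z)\alpha(Z)]$), the basic ERM inequality against an arbitrary $\alpha_*\in{\cal A}_n$, localized concentration at the critical radius for the two star-shaped difference classes, mean-squared continuity to convert $\|m(\cdot;\alpha)-m(\cdot;\alpha_0)\|_2$ into $\sqrt{M}\,\|\alpha-\alpha_0\|_2$, and AM-GM absorption of the cross terms, which is exactly how the $M\delta_n^2 + M\log(1/\zeta)/n$ terms and the unweighted approximation error arise. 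The only detail you gloss over is the quadratic empirical process: writing $\alpha^2-\alpha_0^2=(\alpha-\alpha_0)(\alpha+\alpha_0)$ and invoking contraction requires a sup-norm bound on $\alpha_0$ (equivalently on $\alpha+\alpha_0$), not just on the centered differences $\alpha-\alpha_0$ as stated in the theorem's hypotheses; this boundedness is assumed in the source and should be made explicit, but it does not change the structure or conclusion of your argument.
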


The critical radius is a quantity that has been analyzed for several function spaces of interest, such as high-dimensional linear functions with bounded norms, neural networks and shallow regression trees, many times showing that $\delta_n = O(d_n\, n^{-1/2})$, where $d_n$ are effective dimensions of the hypothesis spaces (see, e.g., \citealp{chernozhukov2021automatic}, for concrete rates). \cref{thm1} can be applied to provide fast statistical estimation guarantees for the corresponding function spaces. 
In our work, we take this theorem to practice for the case of neural networks and random forests.

\section{RieszNet: Targeted Regularization and multitasking}\label{sec:riesznet}

Our design of the RieszNet architecture starts by showing the following lemma:
\begin{lemma}[Central Role of Riesz Representer]\label{lem:observation} In order to estimate the average moment of the regression function $g_0(Z)=\E[Y \mid Z]$ it suffices to estimate regression functions of the form $g_0(Z) = h_0(\alpha_0(Z))$, where $h_0(A) = \E[Y \mid A]$ and $A=\alpha_0(Z)$ is the evaluation of the Riesz representer at a sample. In other words, it suffices to estimate a regression function that solely conditions on the value of the Riesz representer.
\end{lemma}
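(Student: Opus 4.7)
The plan is to show that the surrogate regression function $\bar g(Z) := h_0(\alpha_0(Z))$, with $h_0(A) := \E[Y \mid A]$, produces exactly the same average moment as $g_0$, in the sense that $\E[m(W; \bar g)] = \E[m(W; g_0)] = \theta_0$. The argument combines the defining property of the Riesz representer with two applications of the tower property of conditional expectation. As a preliminary I would note that $\bar g$ is square-integrable, since conditional expectation is an $L^2$ contraction and $Y$ is assumed square-integrable, so the Riesz identity legitimately applies to $\bar g$.

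The chain of equalities I would write out is as follows. By the Riesz representation property applied to $\bar g$, $\E[m(W; \bar g)] = \E[\alpha_0(Z)\, \bar g(Z)] = \E[\alpha_0(Z)\, h_0(\alpha_0(Z))]$. Setting $A := \alpha_0(Z)$ and conditioning on $A$, the tower property yields
\[
\E[A \cdot h_0(A)] = \E[A \cdot \E[Y \mid A]] = \E[A \cdot Y] = \E[\alpha_0(Z)\, Y].
\]
Conditioning next on $Z$ gives $\E[\alpha_0(Z)\, Y] = \E[\alpha_0(Z)\, \E[Y\mid Z]] = \E[\alpha_0(Z)\, g_0(Z)]$, and applying the Riesz identity once more, now in the reverse direction, delivers $\E[\alpha_0(Z)\, g_0(Z)] = \E[m(W; g_0)] = \theta_0$, closing the chain.

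The only real obstacle is conceptual rather than technical: one has to recognize that $\bar g$ is generally \emph{not} equal to $g_0$, but rather the $L^2$ projection of $g_0$ onto the sub-$\sigma$-algebra generated by $\alpha_0(Z)$. Because the functional $g \mapsto \E[m(W; g)]$ acts as an $L^2$ inner product against $\alpha_0$, any component of $g$ orthogonal to functions of $\alpha_0(Z)$ is annihilated, so only this one-dimensional projection matters for the target parameter. Once that observation is made, the proof requires nothing more than two invocations of the tower property, with no rate, smoothness, or identification conditions beyond square-integrability of $Y$ and $\alpha_0$. The practical payoff, which motivates the RieszNet design in the next section, is that the surrogate $h_0$ is a univariate regression, naturally estimated on top of a shared representation whose last layer outputs $\widehat\alpha$.
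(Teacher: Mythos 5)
Your proof is correct and is essentially the paper's own argument: the same chain of equalities — the Riesz identity applied to both $g_0$ and $h_0\circ\alpha_0$, linked by two applications of the tower property (conditioning on $Z$ and on $A=\alpha_0(Z)$) — just traversed in the reverse order, with a harmless extra remark on square-integrability of $h_0\circ\alpha_0$.
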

\begin{proof}
It is easy to verify that:
\begin{align*}
    \theta_0 =~& \E[m(W; g_0)] = \E[g_0(Z) \alpha_0(Z)] = \E[Y \alpha_0(Z)]\\
    =~& \E[\E[Y\mid A=\alpha_0(Z)] \alpha_0(Z)] = \E[h_0(\alpha_0(Z)) \alpha_0(Z)] \\ =~& \E[m(W; h_0 \circ \alpha_0)]. \qedhere
\end{align*}
\end{proof}

This property is a generalization of the observation that, in the case of average treatment effect estimation, it suffices to condition on the propensity score and treatment variable  \citep{rosenbaum1983central}. In the case of the average treatment effect moment, these two quantities suffice to reproduce the Riesz representer. The aforementioned observation generalizes this well-known fact in causal estimation, which was also invoked in the prior work of \citet{shi2019adapting}. 

\cref{lem:observation} allows us to argue that, when estimating the regression function, we can give special attention to features that are predictive of the Riesz representer. This leads to a multitasking neural network architecture, which is a generalization of that of \citet{shi2019adapting} to arbitrary linear moment functionals. 

We consider a deep neural representation of the RR of the form: $\alpha(Z; w_{1:k}, \beta) = \ldot{f_1(Z; w_{1:k})}{\beta}$, where $f_1(X; w_{1:k})$ is the final feature representation layer of an arbitrary deep neural architecture with $k$ hidden layers and weights $w_{1:k}$. The goal of the Riesz estimate is to minimize the Riesz loss:
\begin{multline*}
    \rrloss(w_{1:k}, \beta) := \E_n\big[\alpha(Z; w_{1:k}, \beta)^2 \notag \\ -2\, m(W; \alpha(\cdot; w_{1:k}, \beta)) \big] 
\end{multline*}
In the limit, the representation layer $f_1(Z; w_{1:k})$ will contain sufficient information to represent the true RR. Thus, conditioning on this layer to construct the regression function suffices to get a consistent estimate. 

We will also represent the regression function with a deep neural network, starting from the final layer of the Riesz representer, i.e., $g(Z; w_{1:d}) = f_2(f_1(Z; w_{1:k}); w_{(k+1):d})$, with $d - k$ additional hidden layers and weights $w_{(k+1):d}$. The regression is simply trying to minimize the square loss:
\begin{align*}
    \regloss(w_{1:d}) := \E_n\left[(Y - g(Z; w_{1:d}))^2\right] 
\end{align*}

Importantly,the parameters of the common layers also enter the regression loss, and hence even if the RR function is a constant, the feature representation layer $f_1(Z; w_{1:k})$ will be informed by the regression loss and will be trained to reduce variance by explaining more of the output $Y$ (see Figure~\ref{fig:riesznet-arch}).

\begin{figure}[!htbp]
    \centering
    \resizebox{0.25\textwidth}{0.2\textwidth}{%
%
\setlength{\unitlength}{3947sp}%
\begingroup\makeatletter\ifx\SetFigFont\undefined%
\gdef\SetFigFont#1#2#3#4#5{%
  \reset@font\fontsize{#1}{#2pt}%
  \fontfamily{#3}\fontseries{#4}\fontshape{#5}%
  \selectfont}%
\fi\endgroup%
\begin{picture}(2218,1915)(1224,-4848)
{\color[rgb]{0,0,0}\thinlines
\put(2479,-4363){\circle{110}}
}%
{\color[rgb]{0,0,0}\put(1436,-4836){\framebox(244,1891){}}
}%
{\color[rgb]{0,0,0}\put(1438,-3902){\makebox(1.6667,11.6667){\small.}}
}%
{\color[rgb]{0,0,0}\put(1919,-4836){\framebox(244,1891){}}
}%
{\color[rgb]{0,0,0}\put(2412,-3899){\framebox(247,948){}}
}%
{\color[rgb]{0,0,0}\put(2923,-3899){\framebox(247,948){}}
}%
{\color[rgb]{0,0,0}\put(1670,-3901){\vector( 1, 0){247}}
}%
{\color[rgb]{0,0,0}\put(2665,-3438){\vector( 1, 0){247}}
}%
{\color[rgb]{0,0,0}\put(3171,-3439){\vector( 1, 0){247}}
}%
{\color[rgb]{0,0,0}\put(2164,-3913){\vector( 2,-3){266.308}}
}%
{\color[rgb]{0,0,0}\put(2178,-3888){\vector( 1, 2){226.200}}
}%
\put(2572,-4411){\makebox(0,0)[lb]{\smash{{\SetFigFont{12}{14.4}{\rmdefault}{\mddefault}{\updefault}{\color[rgb]{0,0,0}$\alpha(\cdot)$}%
}}}}
\put(3427,-3476){\makebox(0,0)[lb]{\smash{{\SetFigFont{12}{14.4}{\rmdefault}{\mddefault}{\updefault}{\color[rgb]{0,0,0}$g(\cdot)$}%
}}}}
\put(1239,-4007){\makebox(0,0)[lb]{\smash{{\SetFigFont{12}{14.4}{\rmdefault}{\mddefault}{\updefault}{\color[rgb]{0,0,0}$Z$}%
}}}}
\end{picture}
    \caption{RieszNet architecture.}
    \label{fig:riesznet-arch}
\end{figure}

Finally, we will add a regularization term that is the analogue of the targeted regularization introduced by \citet{shi2019adapting}. In fact, the intuition behind the following regularization term dates back to the early work of \citet{bang2005doubly}, who observed that one can show double robustness of a plug-in estimator in the case of estimation of average effects if one simply adds the inverse of the probability of getting the treatment the units actually received, in a linear manner, and does not penalize its coefficient. We bring this idea into our general formulation by adding the RR as an extra input to our regression problem in a linear manner. In other words, we learn a regression function of the form $\tilde{g}(Z) = g(Z) + \epsilon\cdot \alpha(Z)$, where $\epsilon$ is an unpenalized parameter. Then note that, if we minimize the square loss with respect to $\epsilon$, the resulting estimate will satisfy the property (due to the first order condition), that:
\begin{align*}
    \E_n\left[(Y - g(Z) - \epsilon\cdot \alpha(Z))\cdot \alpha(Z)\right] = 0
\end{align*}
The debiasing correction in the doubly-robust moment formulation is identically equal to zero when we use the regression function $\tilde{g}$, since
$\E_n\left[(Y - \tilde{g}(Z))\cdot \alpha(Z)\right] = 0$.
Thus, the plug-in estimate of the average moment is equivalent to the doubly-robust estimate when one uses the regression model $\tilde{g}$, since:
\begin{align*}
    \widehat{\theta} & = \E_n[m(Z; \tilde{g})] \\ & = \E_n[m(Z;\tilde{g})] + \E_n\left[(Y - \tilde{g}(Z))\cdot \alpha(Z)\right]
\end{align*}

A similar intuition underlies the TMLE framework. However, in that framework, the parameter $\epsilon$ is not simultaneously optimized together with the regression parameters $w$, but rather in a post-processing step: first, an arbitrary regression model $g$ is fitted (via any regression approach), and, subsequently, the preliminary $g$ is corrected by solving a linear regression problem of the residuals $Y - g(Z)$ on the Riesz representer $\alpha(Z)$, to estimate a coefficient $\epsilon$. Then, the corrected regression model $g(Z) + \epsilon\cdot \alpha(Z)$ is used in a plug-in manner to estimate the average moment. For an overview of these variants of doubly-robust estimators see \citet{tran2019double}. In that respect, our Riesz estimation approach can be viewed as automating the process of identifying the least favorable parametric sub-model required by the TMLE framework and which is typically done on a case-by-case basis and based on analytical derivations of the efficient influence function. Thus, we contribute to the recent line of work on such automated TMLE \citep{carone2019}.

In this work, similar to \citet{shi2019adapting}, we take an intermediate avenue, where the correction regression loss
\begin{multline*}
  \tmleloss(w_{1:d}, \beta, \epsilon) := \E_n\big[ (Y - g(Z; w_{1:d}) \\ - \epsilon\cdot \alpha(Z; w_{1:k}, \beta))^2 \big].  
\end{multline*}
is added as a targeted regularization term, rather than as a post-processing step. 

Combining the Riesz, regression and targeted regularization terms leads to the overall loss that is optimized by our multitasking deep architecture:
\begin{align}
\begin{split}
    \min_{w_{1:d}, \beta, \epsilon} ~& \regloss(w_{1:d}) + \lambda_1 \rrloss(w_{1:k}, \beta) \\
    ~&~~ + \lambda_2 \tmleloss(w_{1:d}, \beta, \epsilon) + R(w_{1:d}, \beta) \label{eqn:riesznet-loss}
\end{split}
\end{align}
where $R$ is any regularization penalty on the parameters of the neural network, which crucially does not take $\epsilon$ as input. Minimizing the neural network parameters of the loss defined in Equation~(\ref{eqn:riesznet-loss}) using stochastic first order methods constitutes our RieszNet estimation method for the average moment of a regression function.
Note that, in the extreme case when $\lambda_1=0$, the second loss is equivalent to the one-step approach of \citet{bang2005doubly}, while as $\lambda_2$ goes to zero the parameters $w_{1:d}$ are primarily optimized based on the square loss, and hence the $\epsilon$ is estimated given a fixed regression function $g$, thereby mimicking the two-step approach of the TMLE framework.

\section{ForestRiesz: Locally Linear Riesz Estimation}\label{sec:forestriesz}

One approach to constructing a tree that approximates the solution to the Riesz loss minimization problem is to simply use the Riesz loss as a criterion function when finding an optimal split among all variables $Z$. However, we note that this approach introduces a large discontinuity in the treatment variable $T$, which is part of $Z$. Such discontinuous in $T$ function spaces will typically not satisfy the mean-squared continuity property. Furthermore, since the moment functional typically evaluates the function input at multiple treatment points, the critical radius of the resulting function space $m\circ \alpha$ runs the risk of being extremely large and hence the estimation error not converging to zero. Moreover, unlike the case of a regression forest, it is not clear what the ``local node'' solution will be if we are allowed to split on the treatment variable, since the local minimization problem can be ill-posed.

As a concrete case, consider the example of an average treatment effect of a binary treatment. One could potentially minimize the Riesz loss by constructing child nodes that contain no samples from one of the two treatments. In that case the local node solution to the Riesz loss minimization problem is not well-defined. 

For this reason, we consider an alternative formulation, where the tree is only allowed to split on variables other than the treatment, i.e., the variables $X$. Then we consider a representation of the RR that is locally linear with respect to some pre-defined feature map $\phi(T, X)\in \RR^d$ (e.g., a polynomial series): $\alpha(Z) = \ldot{\phi(T, X)}{\beta(X)}$, where $\beta(X)$ is a non-parametric (potentially discontinuous) function estimated based on the tree splits and $\phi(T, X)$ is a smooth feature map. In that case, by the linearity of the moment, the Riesz loss takes the form:
\begin{align}
    \min_{\beta} \E_n[\beta(X)^\top \phi(Z) \phi(Z)^\top \beta(X) - 2\, \beta(X)^\top m(W; \phi)]
\end{align}
where we use the short-hand notation $m(W; \phi)=(m(W; \phi_1), \ldots, m(W; \phi_d))$.
Since $\beta(\cdot)$ is allowed to be fully non-parametric, we can equivalently formulate the above minimization problem as satisfying the local first-order conditions conditional on each target $x$, i.e., $\beta(x)$ solves:
\begin{align}
\E[\phi(Z)\phi(Z)^\top\beta(x)  - m(W; \phi) \mid X=x] = 0
\end{align}
This problem falls in the class of problems defined via solutions to moment equations. Hence, we can apply the recent framework of Generalized Random Forests (henceforth, GRF) of \citet{athey2019generalized} to solve this local moment problem via random forests.

That is exactly the approach we take in this work. We note that we depart from the exact algorithm presented in \citet{athey2019generalized} in that we slightly modify the criterion function to not solely maximize the heterogeneity of the resulting local estimates from a split (as in \citealp{athey2019generalized}), but rather to exactly minimize the Riesz loss criterion. The two criteria are slightly different. In particular, when we consider the splitting of a root node into two child nodes, then \citet{athey2019generalized} chooses a split that maximizes $N_1 \beta_1(X)^2 + N_2 \beta_2(X)^2$. Our criterion penalizes splits where the local jacobian matrix:
$$J(\text{child}) := \frac{1}{|\text{child}|}\sum_{i\in \text{child}} \phi(Z_i) \phi(Z_i)^\top$$
is ill-posed, i.e. has small minimum eigenvalues (where $|\text{child}|$ denotes the number of samples in a child node). In particular, note that the local solution at every leaf is of the form $\beta(\text{child}) =  J(\child)^{-1} M(\child)$, where: \begin{align*} M(\child) := \textstyle{\frac{1}{|\child|} \sum_{i\in \child} m(W_i;\phi)}
\end{align*}
and the average Riesz loss after a split is proportional to:
$$- \sum_{\child \in \{1, 2\}} |\child|\, \beta(\child)^\top J(\child) \beta(\child).$$
Hence, minimizing the Riesz loss is equivalent to maximizing the negative of the above quantity. Note that the heterogeneity criterion of \citet{athey2019generalized} would simply maximize $\sum_{\child \in \{1, 2\}} |\child|\, \beta(\child)'\beta(\child)$, ignoring the ill-posedness of the local Jacobian matrix. However, we note that the consistency results of \citet{athey2019generalized} do not depend on the exact criterion that is used and solely depend on the splits being sufficiently random and balanced. Hence, they easily extend to the criterion that we use here.

Finally, we note that our forest approach is also amenable to multitasking, since we can add to the moment equations the extra set of moment equations that correspond to the regression problem, i.e., simply $\E[Y - g(x)\mid X=x]=0$ and invoking a GRF for the super-set of these equations and the Riesz loss moment equations. This leads to a multitasking forest approach that learns a single forest to represent both the regression function and the Riesz function, to be used for subsequent debiasing of the average moment.

\section{Experimental Results}\label{sec:experiments}
In this section, we evaluate the performance of RieszNet and ForestRiesz in two settings that are central in causal and policy estimation: the Average Treatment Effect (ATE) of a binary treatment (\cref{ex1}) and the Average Derivative of a continuous treatment (\cref{ex3}). Throughout this section, we use RieszNet and ForestRiesz to learn the regression function $g_0$ and RR $\alpha_0$, and compare the following three methods: (i) direct, (ii) Inverse Propensity Score weighting (IPS) and (iii) doubly-robust (DR): 
\begin{gather*}
\wt_{\mathrm{direct}} = \E_n[m(W; \widehat{g})], \qquad \wt_{\mathrm{IPS}} = \E_n[\widehat{\alpha}(Z) Y], \\
\wt_{\mathrm{DR}} = \E_n\left[m(W; \widehat{g}) + \widehat\alpha(Z)(Y-\widehat{g}(Z))\right].
\end{gather*}
The first method simply plugs in the regression estimate $\widehat{g}$ into the moment of interest and averages. The second method uses the fact that, by the Riesz representation theorem and the tower property of conditional expectations, $\theta_0 = \E[m(W; g_0)] = \E[\alpha_0(Z) g_0(Z)] = \E[\alpha_0(Z) Y]$. The third, our preferred method, combines both approaches as a debiasing device, as explained in \cref{sec:estimation}.

\subsection{Average Treatment Effect in the IHDP Dataset}
Following \citet{shi2019adapting}, we evaluate the performance of our estimators for the Average Treatment Effect (ATE) of a binary treatment on 1000 semi-synthetic datasets based on the Infant Health and Development Program (IHDP). IHDP was a randomized control trial aimed at studying the effect of home visits and attendance at specialized clinics on future developmental and health outcomes for low birth weight, premature infants \citep{gross1993infant}. We use the \texttt{NPCI} package in \texttt{R} to generate the semi-synthetic datasets under setting ``A'' \citep{dorie2016non}. Each dataset consists of 747 observations of an outcome $Y$, a binary treatment $T$ and 25 continuous and binary confounders $X$.

\cref{tab:MAE_IHDP} presents the mean absolute error (MAE) over the 1000 semi-synthetic datasets. Our preferred estimator, which uses the doubly-robust (DR) moment functional to estimate the ATE, achieves a MAE of 0.110 (std. err. 0.003) and 0.126 (std. err. 0.004) when using RieszNet and ForestRiesz, respectively.\footnote{See \cref{sec:details} for the architecture and tuning details we used for RieszNet in all experiments.} A natural benchmark against which to compare our Auto-DML methods are plug-in estimators. These use the known form of the Riesz representer for the case of the ATE and an estimate of the propensity score $p_0(X) := \Pr(T = 1 \mid X)$ to construct the Riesz representer as: $$\widehat{\alpha}(T,X) = \frac{T}{\widehat{p}(X)} - \frac{1 - T}{1 - \widehat{p}(X)}.$$
The state-of-the-art neural-network-based plug-in estimator is the Dragonnet of \citet{shi2019adapting}, which gives a MAE of 0.14 over our 1000 instances of the data. A plug-in estimator where both the regression function and the propensity score are estimated by random forests yields a MAE of 0.389. The CausalForest alternative of \citet{athey2019generalized}, which also plugs in an estimated propensity score, yields an even larger MAE of 0.728. Hence, automatic debiasing seems a promising alternative to current methods even for causal parameters like the ATE, for which the form of the Riesz representer is well-known.

\begin{table}[!htbp]
    \centering
    \caption{RieszNet and ForestRiesz: Mean Absolute Error (MAE) and its standard error over 1000 semi-synthetic datasets based on the IHDP experiment.}
    \label{tab:MAE_IHDP}
    \begin{subtable}{0.45\textwidth}
    \centering
    \caption{RieszNet}
    \begin{tabular}{p{3cm} c} 
\toprule
 & MAE $\pm$ std. err. \\
\midrule
Direct & 0.123 $\pm$ 0.004 \\
IPS & 0.122 $\pm$ 0.037 \\
DR &  0.110 $\bm\pm$ 0.003 \\
\midrule 
\multicolumn{2}{l}{\textbf{Benchmark:}} \\
Dragonnet \citep{shi2019adapting} & \multirow{2}{*}{0.146 $\pm$ 0.010} \\
\bottomrule
\end{tabular}

    \end{subtable}
    
    \vspace{1em}
    \begin{subtable}{0.45\textwidth}
    \centering
    \caption{ForestRiesz}
    \begin{tabular}{p{3cm} c} 
\toprule 
 & MAE $\pm$ std. err. \\ 
\midrule 
Direct & 0.197 $\pm$ 0.007 \\
IPS & 0.669 $\pm$ 0.004 \\
DR & 0.126 $\bm\pm$ 0.004 \\
\midrule 
\multicolumn{2}{l}{\textbf{Benchmark:}} \\
RF Plug-in (see text) & 0.389 $\pm$ 0.024 \\
CausalForest \citep{athey2019generalized} & \multirow{2}{*}{0.728 $\pm$ 0.028} \\
\bottomrule 
 \end{tabular}
    \end{subtable}
\end{table}

To assess the coverage of our asymptotic confidence intervals in the same setting, we perform another experiment in which this time we also redraw the treatment, according to the propensity score setting ``True'' in the \texttt{NPCI} package. Outcomes are still generated under setting ``A.'' The normal-based asymptotic 95\% confidence intervals are constructed as $\big[\wt \mp 1.96 \times \mathrm{s.e.}(\wt)\big]$, where $\mathrm{s.e.}(\wt)$ is $n^{-1/2}$ times the sample standard deviation of the corresponding identifying moment: $m(W; \widehat{g})$ for the direct method, $\widehat{\alpha}(Z)Y$ for IPS and $m(W; \widehat{g}) + \widehat{\alpha}(Z)(Y - \widehat{g}(Z))$ for DR.

The results in \cref{fig:cov_IHDP}, based on 100 instances of the dataset, show that the performance of RieszNet and ForestRiesz is excellent in terms of coverage when using the doubly-robust (DR) moment. Confidence intervals cover the true parameter 95\% and 96\% of the time (for a nominal 95\% confidence level), respectively. The DR moment also has the lowest RMSE. On the other hand, the direct method (which does not use the debiasing term) seems to have lower bias for the RieszNet estimator, although in both cases its coverage is very poor. This is because the standard errors without the debiasing term greatly underestimate the true variance of the estimator.
\begin{figure}[!h]
    \centering
    \begin{subfigure}{0.4\textwidth}
    \centering
    \includegraphics[width = 0.8\textwidth]{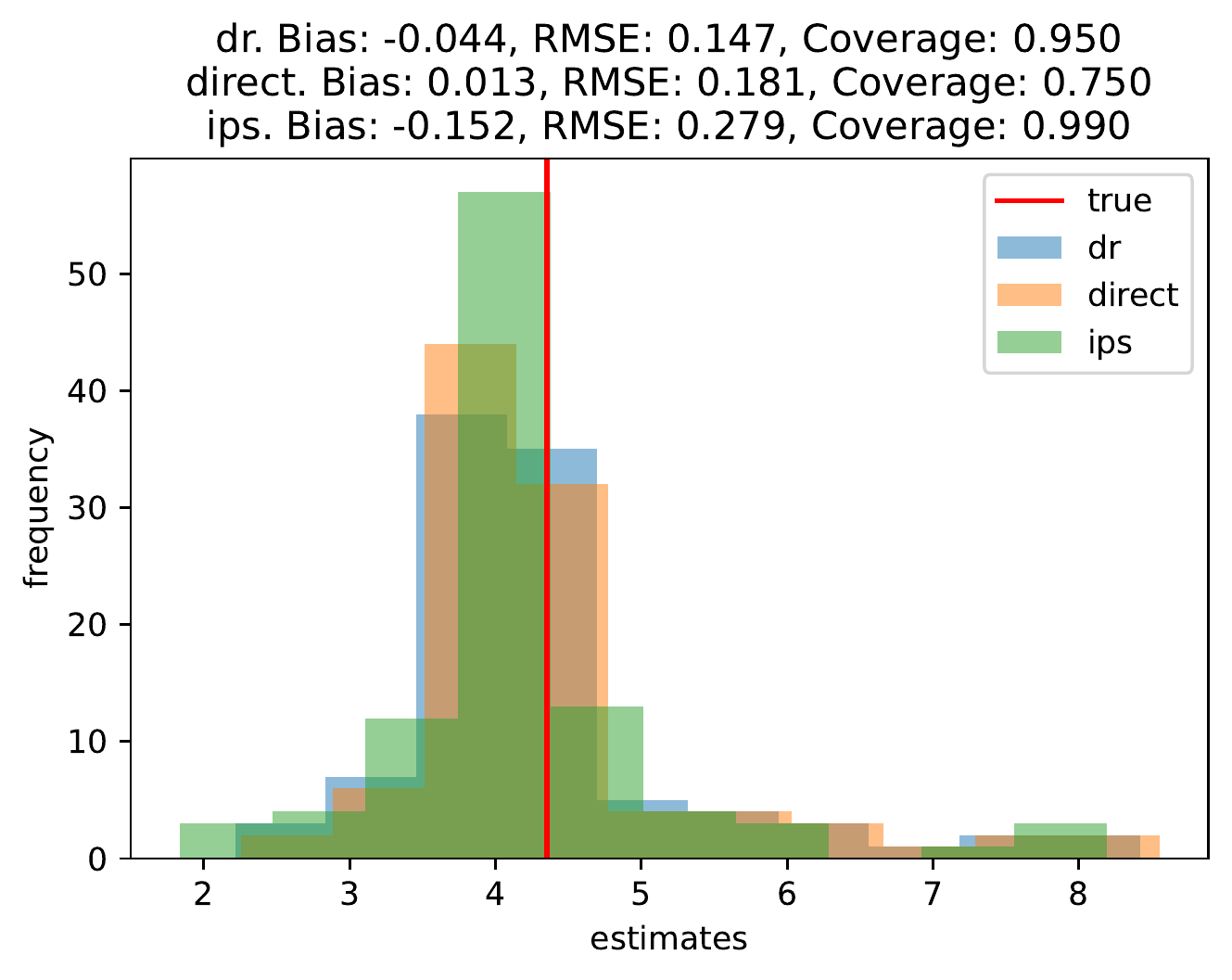}
    \caption{RieszNet}
    \end{subfigure}

\vspace{1em}
    \begin{subfigure}{0.4\textwidth}
    \centering
    \includegraphics[width = 0.8\textwidth]{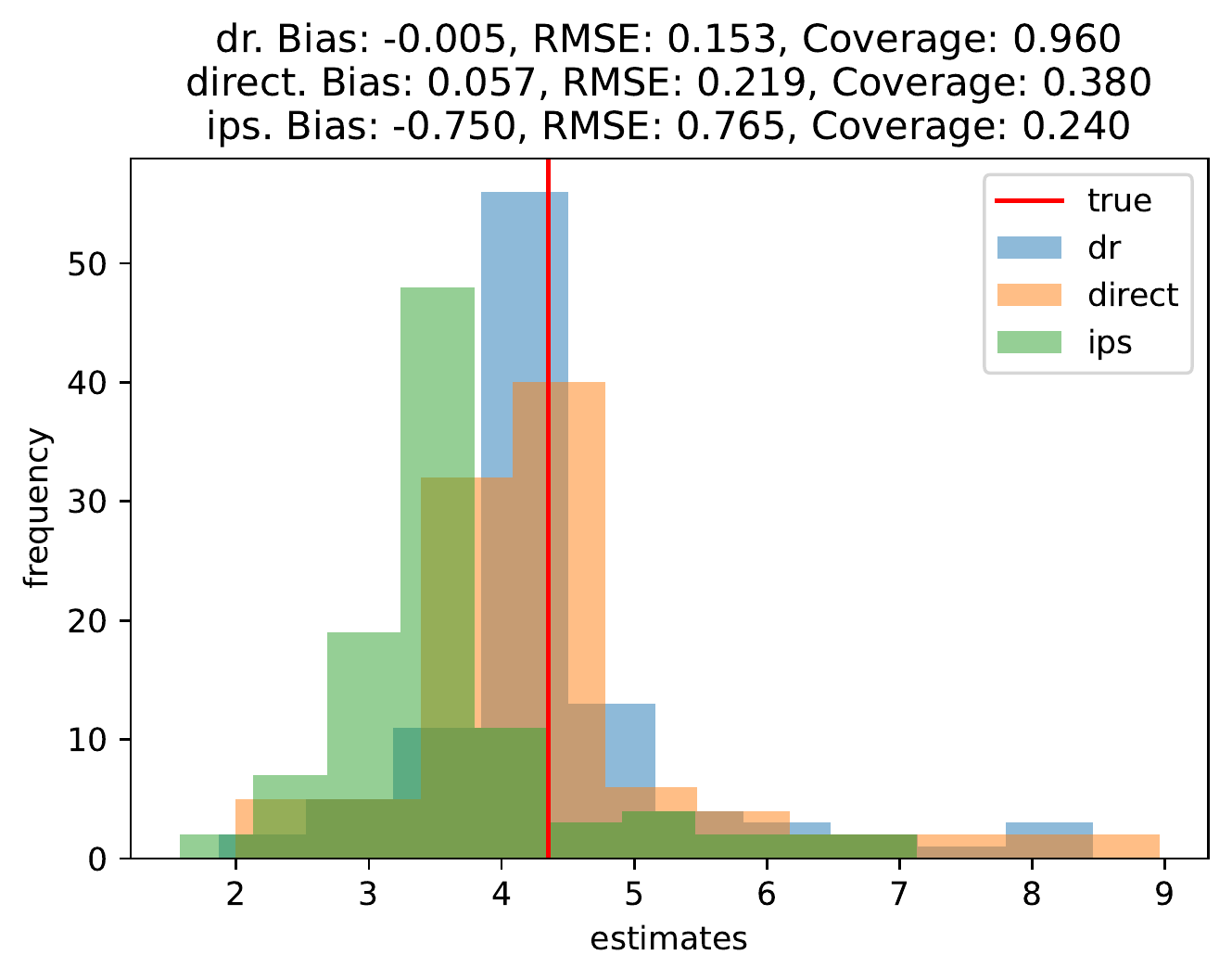}
    \caption{ForestRiesz}
    \end{subfigure}
    \caption{RieszNet and ForestRiesz: Bias, RMSE, coverage and distribution of estimates over 100 semi-synthetic datasets based on the IHDP experiment, where we redraw $T$.}
    \label{fig:cov_IHDP}
\end{figure}
  
\subsection{Average Derivative in the BHP Gasoline Demand Data}\label{sec:bhp}
To evaluate the performance of our estimators for average marginal effects of a continuous treatment, we conduct a semi-synthetic experiment based on gasoline demand data from \citet{blundell2017nonparametric} [BHP]. The dataset is constructed from the 2001 National Household Travel Survey, and contains 3,640 observations at the household level. The outcome of interest $Y$ is (log) gasoline consumption. We want to estimate the effects of changing (log) price $T$, adjusting for differences in confounders $X$, including (log) household income, (log) number of drivers, (log) household respondent age, and a battery of geographic controls.

We generate our semi-synthetic data as follows. First, we estimate $\mu(X) := \E[T \mid X]$ and $\sigma^2(X) := \Var (T \mid X)$ by a Random Forest of $T$ and $(T - \widehat{\mu}(X))^2$ on $X$, respectively. We then draw 3,640 observations of $T \sim \mathcal{N}(\widehat{\mu}(X), \widehat{\sigma}^2(X))$, and generate $Y = f(T, X) + \varepsilon$, for six different choices of $f(\cdot)$. The error term $\epsilon$ is drawn from a $\mathcal{N}(0, \sigma^2)$, with $\sigma^2$ chosen to guarantee that the simulated regression $R^2$ matches the one in the true data. 

The exact form of $f$ in each design is detailed in \cref{sec:designs}. In the ``simple $f$'' designs we have a constant, homogeneous marginal effect of $-0.6$ (within the range of estimates in \citealp{blundell2012measuring}, using the real survey data). In the ``complex $f$'' designs, we have a regression function that is cubic in $T$, and where there are heterogeneous marginal effects by income (built to average approximately $-0.6$). In both cases, we evaluate the performance of the estimators without confounders $X$, and with confounders entering the regression function linearly and non-linearly.

\cref{tab:BHPshort} presents the results for the most challenging design: a complex regression function with linear and non-linear confounders (see \cref{tab:BHP_NN,tab:BHP_RF} in the Appendix for the full set of results in all designs). ForestRiesz with the doubly-robust moment combined with the post-processing TMLE adjustment (in which we use a corrected regression $\widetilde{g}(Z) = \widehat{g}(Z) + \epsilon \cdot \widehat{\alpha}(Z)$, where $\epsilon$ is the OLS coefficient of $Y - \widehat{g}(Z)$ on $\widehat{\alpha}(Z)$) seems to have the best performance in cases with many linear and non-linear confounders, with coverage close to or above the nominal confidence level (95\%), and biases of around one order of magnitude lower than the true effect. As in the binary treatment case, the direct method has low bias but the standard errors underestimate the true variance of the estimator, and so coverage based on asymptotic confidence intervals is poor.\footnote{As can be seen in the Appendix, ForestRiesz seems to have larger bias and low coverage when there are no confounders compared to RieszNet (both using the IPS or the DR moments).}

We can consider a plug-in estimator as a benchmark. Using the knowledge that $T$ is normally distributed conditional on covariates $X$, the plug-in Riesz representer can be constructed using Stein's identity \citep{lehmann2006theory}, as: $$\widehat{\alpha}(T,X) = \frac{T - \widehat{\mu}(X)}{\widehat{\sigma}^2(X)},$$ where $\widehat{\mu}(X)$ and $\widehat{\sigma}^2$ are random forest estimates of the conditional mean and variance of $T$, respectively. The results for the plug-in estimator are on \cref{tab:plugin_RF}. Surprisingly, we find that our method, which is fully generic and non-parametric, slightly outperforms the plug-in that uses knowledge of the Gaussian conditional distribution.

\begin{table}[!h]
    \centering
    \caption{RieszNet and ForestRiesz: bias, RMSE and coverage over 1000 semi-synthetic datasets based on the BHP gasoline price data (10 different random seeds). The DGP is based on a complex regression function with linear and non-linear confounders.}
    \label{tab:BHPshort}
    \begin{subtable}[t]{0.45\textwidth}
    \centering
    \caption{RieszNet}
    \begin{tabular}{l*{3}{r}} 
\toprule 
 &  Bias &  RMSE &  Cov. \\ 
 \midrule
DR & 0.062 & 0.504 & 0.877 \\
Direct & 0.053 & 0.562 & 0.056 \\
IPS & 0.061 & 0.496 & 0.916 \\ 
\bottomrule 
\end{tabular}
    \end{subtable}
    
    \vspace{1em}
    \begin{subtable}[t]{0.45\textwidth}
    \centering
    \caption{ForestRiesz (with simple cross-fitting and multitasking)}
    \begin{tabular}{l*{3}{r}} 
\toprule 
 &  Bias &  RMSE &  Cov. \\ 
 \midrule
DR + post-TMLE & -0.082 & 0.327 & 0.953\\
DR & -0.131 & 0.377 & 0.909 \\
Direct & 0.094 & 0.304 & 0.046 \\
IPS  & -0.161 & 0.443 & 0.847 \\ 
\midrule
\multicolumn{4}{l}{\textbf{Benchmark:}} \\
RF Plug-in & 0.055 & 0.327 & 0.912 \\
\bottomrule 
\end{tabular}
    \end{subtable}
\end{table}

\cref{fig:BHP_cov} shows the distribution of estimates under the most complex design for RieszNet and ForestNet (simple cross-fitting and multitasking). The distribution is approximately normal and properly centered around the true value, with small bias for the doubly-robust estimators.

\begin{figure}[!h]
    \centering
    \begin{subfigure}[b]{0.4\textwidth}\centering
    \includegraphics[width=0.8\textwidth]{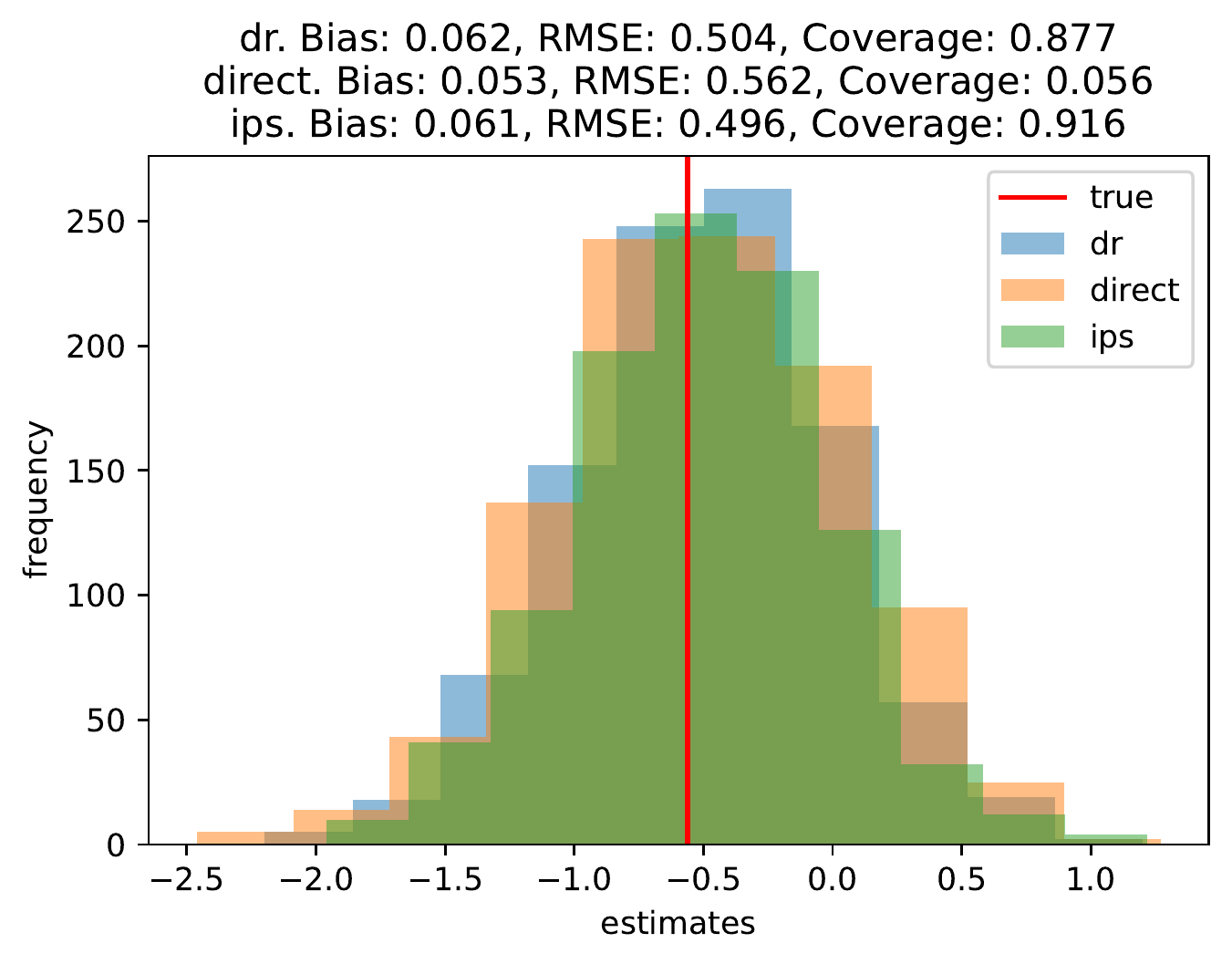}
    \caption{RieszNet}
    \end{subfigure}
    
    \vspace{1em}
    \begin{subfigure}[b]{0.4\textwidth}\centering
    \includegraphics[width=0.8\textwidth]{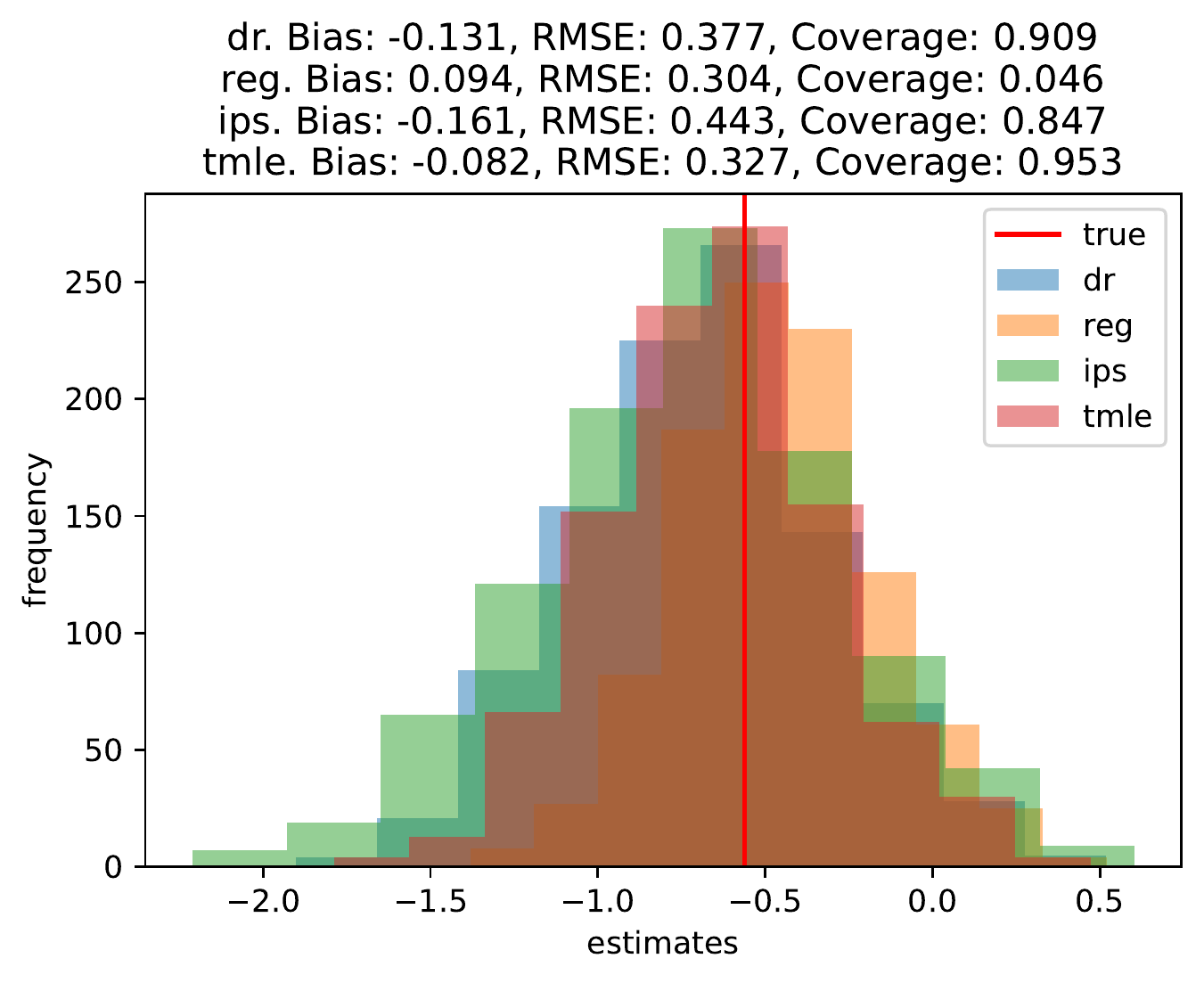}
    \caption{ForestRiesz (simple xfit, multitasking)}
    \end{subfigure}
    \caption{RieszNet and ForestRiesz: bias, RMSE, coverage and distribution of estimates over 1000 semi-synthetic datasets based on the BHP gasoline price data (10 different random seeds). The DGP is based on a complex regression function with linear and non-linear confounders.}
    \label{fig:BHP_cov}
\end{figure}

\vspace{-1em}
\subsection{Ablation Studies}
\begin{table*}[!tp]
    \centering
    \caption{IHDP ablation studies for RieszNet. Row 2 uses no multitasking, the Riesz representer and regression function are estimated using separate NNs. Row 3 removes ``end-to-end'' training of the shared representation: the weights of the common layers are trained on the Riesz loss only, then frozen when optimizing the regression loss. Row 4 removes ``end-to-end'' learning of the TMLE adjustment: we set $\lambda_2 = 0$ and then adjust the outputs of RieszNet in a standard TMLE post-processing step.}
    \label{tab:ablationRN}
    \vspace{1em}
    \begin{tabular}{*{10}{r}} 
\toprule 
& \multicolumn{3}{c}{Direct} & \multicolumn{3}{c}{IPS} & \multicolumn{3}{c}{DR} \\ 
\cmidrule(lr){2-4} \cmidrule(lr){5-7} \cmidrule(lr){8-10} 
&  Bias &  RMSE &  Cov. &  Bias &  RMSE &  Cov. &  Bias &  RMSE &  Cov. \\ 
\midrule 
Baseline & 0.013 & 0.181 & 0.750 & -0.152 & 0.279 & 0.990 & -0.044 & 0.147 & 0.950 \\ 
Separate NNs & -0.125 & 0.190 & 0.710 & -0.034 & 1.739 & 0.690 & -0.176 & 0.411 & 0.880 \\ 
No end-to-end & -0.036 & 1.083 & 0.300 & -0.316 & 2.378 & 0.700 & -0.051 & 1.221 & 0.650 \\ 
TMLE post-proc. & -0.065 & 0.181 & 0.750 & -0.148 & 0.329 & 1.000 & -0.088 & 0.182 & 0.950 \\ 
\bottomrule 
 \end{tabular}
\end{table*}

\begin{table*}[!tp]
    \centering
    \caption{BHP ablation studies for ForestRiesz. Row 2 uses no cross-fitting and no multitasking. Row 3 uses no cross-fitting and multitasking. Row 4 uses simple cross-fitting and no multitasking. Row 5 uses double cross-fitting.}
    \label{tab:ablationFR}
    \vspace{1em}
    \resizebox{\textwidth}{!}{\footnotesize\begin{tabular}{*{13}{r}} 
\toprule 
& \multicolumn{3}{c}{Direct} & \multicolumn{3}{c}{IPS} & \multicolumn{3}{c}{DR} & \multicolumn{3}{c}{DR + post-TMLE} \\ 
\cmidrule(lr){2-4} \cmidrule(lr){5-7} \cmidrule(lr){8-10} \cmidrule(lr){11-13} 
&  Bias &  RMSE &  Cov. &  Bias &  RMSE &  Cov. &  Bias &  RMSE &  Cov. &  Bias &  RMSE &  Cov. \\ 
\midrule
Baseline & 0.094 & 0.304 & 0.046 & -0.161 & 0.443 & 0.847 & -0.131 & 0.377 & 0.909 & -0.082 & 0.327 & 0.953 \\ 
No x-fit, no m-task & 0.070 & 0.260 & 0.053 & 0.095 & 0.253 & 0.934 & -0.033 & 0.274 & 0.894 & -0.079 & 0.314 & 0.827 \\ 
No x-fit, m-task & 0.095 & 0.259 & 0.064 & 0.095 & 0.253 & 0.936 & -0.012 & 0.293 & 0.884 & -0.060 & 0.326 & 0.835 \\ 
X-fit, no m-task & 0.069 & 0.264 & 0.056 & -0.160 & 0.443 & 0.846 & -0.135 & 0.365 & 0.911 & -0.091 & 0.331 & 0.945 \\ 
Double x-fit & 0.070 & 0.313 & 0.061 & -0.196 & 0.468 & 0.849 & -0.158 & 0.454 & 0.831 & -0.094 & 0.338 & 0.950 \\ 
\bottomrule 
 \end{tabular}}
\end{table*}

\paragraph{RieszNet} The RieszNet estimator combines several features: multitasking and end-to-end learning of the shared representation for the regression and the RR, end-to-end learning of the TMLE adjustment. To assess which of those features are crucial for the performance gains of RieszNet, we conduct a series of ablation studies based on the IHDP ATE experiments. The results are in \cref{tab:ablationRN}.

The first row of the table presents the results of the baseline RieszNet specification, as in \cref{fig:cov_IHDP} (a). The second row considers two separate neural nets for the regression and RR (which the same architecture as RieszNet, but with the difference that the first layers are not shared), trained separately. This has much worse bias, RMSE and coverage as compared to the multitasking RieszNet architecture. The second row considers the RieszNet architecture but without end-to-end training: here, we train the shared layers first based on the Riesz loss only ($f_1(Z;w_{1:k})$ in the notation of \cref{sec:riesznet}), and then we train the regression-specific layers ($f_2(f_1(Z;w_{1:k}); w_{(k+1):d})$ in the notation of \cref{sec:riesznet}) freezing $w_{1:k}$. This alternative also performs substantially worse than the baseline RieszNet specification, with much larger RMSE due to a higher variance, which also results in lower coverage. Finally, we try a version of RieszNet without end-to-end learning of the TMLE adjustment; i.e., we set $\lambda_2 = 0$ and train the TMLE adjustment in a standard TMLE post-processing step. This alternative performs as well as the baseline RieszNet in terms of coverage, but it has somewhat worse bias. All in all, it seems that multitasking, end-to-end learning of the regression and RR learners, in the spirit of \cref{lem:observation}, is important for the performance gains of RieszNet. This is consistent with the results of \citet{shi2019adapting}, who train the regression and propensity score learners with a similar architecture.

\vspace{-1em}
\paragraph{ForestRiesz} We leverage the BHP average derivative experiment to investigate the effects of cross-fitting and multitasking on our ForestRiesz estimator. The baseline specification in \cref{tab:BHPshort} (b) uses multitasking (i.e., the regression $\widehat{g}$ and RR $\widehat{\alpha}$ are learnt using a single GRF) and simple cross-fitting (see \cref{sec:debiasing}). The results are collected in \cref{tab:ablationFR}. 

As in \cref{sec:bhp}, we find that the DR method with the TMLE adjustment tends to outperform the other methods both in terms of bias and in terms of coverage. When we use no cross-fitting (rows 2 and 3), the coverage of the confidence intervals is substantially lower than the nominal confidence level of 95\%. Simple cross-fitting without multitasking or double cross-fitting (rows 4 and 5) improve coverage, but have slightly worse bias and RMSE as compared to the baseline. Notice that multitasking is not compatible with double cross-fitting, since different samples are used to estimate the regression $\widehat{g}$ and RR $\widehat{\alpha}$.

Our results highlight the role of cross-fitting in performing inference on average causal effects using machine learning. Early literature focused on deriving sufficient conditions on the growth of the entropic complexity of machine learning procedures such that overfitting biases in estimation of the main parameters are small, e.g., \citet{belloni2012sparse}, \citet{belloni2014uniform}, \citet{belloni2017program, belloni2018:Z}; see also \citet{farrell2021deep}, \citet{chen2022debiased} for recent advances, in particular \citet{chen2022debiased} replace entropic complexity requirements with (more intuitive) stability conditions. 
On the other hand,  \citet{belloni2012sparse}, \citet{chernozhukov2018double}, \citet{newey2018cross} show that cross-fitting requires strictly weaker regularity conditions on the machine learning estimators, and in various experiments it removes the overfitting biases even if the machine learning estimators theoretically obey the required conditions to be used without sample splitting (e.g.,  Lasso or Random Forest). \citet{shi2019adapting} comment that that cross-fitting decreases the performance of their NN-based dragonnet estimator in some (non-reported) experiments.
Here we find the opposite with ForestRiesz: simple  cross-fitting tends to improve coverage of the confidence intervals substantially.

\section*{Acknowledgements}
Newey acknowledges research support from the National Science Foundation grant 1757140. Chernozhukov acknowledges research support from Amazon's Core AI research grant.

\bibliography{references}
\bibliographystyle{icml2022}

\newpage
\appendix
\onecolumn
\section{Appendix}
\setcounter{table}{0}
\renewcommand{\thetable}{A\arabic{table}}
\setcounter{figure}{0}
\renewcommand{\thefigure}{A\arabic{figure}}
\subsection{RieszNet Architecture and Training Details}\label{sec:details}
As described in \cref{sec:riesznet}, the architecture of RieszNet consists of $k$ common hidden layers that are used to learn both the RR and the regression function, and $d-k$ additional hidden layers to learn the regression function. In our simulations, we choose $k = 3$ with a width of 200 and ELU activation function for the common hidden layers, and $d - k = 2$ with a width of 100 and also ELU activation function for the regression hidden layers. In the ATE experiments, the architecture is bi-headed, i.e., we estimate one net of regression hidden layers for $T = 1$ and one for $T = 0$, with one single net of common hidden layers.

We split our dataset in a training fold and a test fold (20\% and 80\% of the sample respectively). Following \citet{shi2019adapting}, we train our network in two steps: (i) a fast training step, (ii) a fine-tuning step. In the fast training step, we use a learning rate of $10^{-4}$, with early stopping after 2 epochs if the test error is smaller than $10^{-4}$, and with a maximum of 100 training epochs. In the fine tuning step, we use a learning rate of $10^{-5}$, with the same early stopping rule after 40 epochs, and with a maximum of 600 training epochs.

We use L2 regularization throughout, with a penalty of $10^{-3}$, a weight $\lambda_1 = 0.1$ on the RRLoss and $\lambda_2 = 1$ on the targeted regularization loss (as defined in \cref{eqn:riesznet-loss}), and the Adam optimizer.

\subsection{Designs for the BHP Experiment}\label{sec:designs}
For the average derivative experiment based on BHP data, we generate the outcome variable $y = f(T,X) + \varepsilon$ with six different choices of $f$:
\begin{enumerate}
    \item Simple $f$: $$f(T, X) = -0.6T$$
    \item Simple $f$ with linear confounders: $$f(T, X) = -0.6T + X_{1:21} \cdot b$$
    \item Simple  $f$ with linear and non-linear confounders: $$f(T, X) = -0.6T + X_{1:21} \cdot b + \mathrm{NL}(X)$$
    \item Complex $f$: $$f(T, X) = -\frac{1}{6} \left(\frac{X_1^2}{10} + 0.5\right) T^3$$
    \item Complex $f$ with linear confounders: $$f(T, X) = -\frac{1}{6}\left(\frac{X_1^2}{10} + X_{1:9} \cdot c + 0.5\right) T^3 + X_{1:21} \cdot b$$
    \item Complex  $f$ with linear and non-linear confounders: $$f(T, X) = -\frac{1}{6}\left(\frac{X_1^2}{10} + X_{1:9} \cdot c + 0.5\right) T^3 + X_{1:21} \cdot b + \mathrm{NL}(X)$$
\end{enumerate} where $\mathrm{NL}(X) = 1.5 \sigma(10X_6) + 1.5 \sigma(10X_8)$ for the sigmoid function $\sigma(t) = 1/(1 + e^{-t})$, and where the coefficients $b \sim \text{iid } \mathcal{U}[-0.5, 0.5]$ and $c \sim \text{iid } \mathcal{U}[-0.2, 0.2]$ are drawn once per design at the beginning of the simulations (we try 10 different random seeds).

\clearpage
\subsection{Full Set of Results for the BHP Experiment}
\begin{table}[!htbp]
    \centering
    \caption{RieszNet: Regression and Riesz representer $R^2$, bias, RMSE and coverage over 1000 semi-synthetic datasets based on the BHP gasoline price data (10 different random seeds).}\label{tab:BHP_NN}
    \vspace{1em}
    \begin{tabular}{*{11}{r}} 
\toprule 
&& \multicolumn{3}{c}{Direct} & \multicolumn{3}{c}{IPS} & \multicolumn{3}{c}{DR} \\ 
\cmidrule(lr){3-5} \cmidrule(lr){6-8} \cmidrule(lr){9-11} 
reg $R^2$ &  rr $R^2$ &  Bias &  RMSE &  Cov. &  Bias &  RMSE &  Cov. &  Bias &  RMSE &  Cov. \\ 
\midrule 
\addlinespace 
 \multicolumn{11}{l}{\textbf{1. Simple f}} \\ 
0.890 & 0.871 & 0.008 & 0.047 & 0.063 & 0.022 & 0.044 & 0.939 & 0.009 & 0.041 & 0.926 \\ 
\addlinespace 
 \multicolumn{11}{l}{\textbf{2. Simple f with linear confound.}} \\ 
0.825 & 0.865 & 0.046 & 0.554 & 0.047 & 0.083 & 0.494 & 0.918 & 0.065 & 0.500 & 0.872 \\ 
\addlinespace 
 \multicolumn{11}{l}{\textbf{3. Simple f with linear and non-linear confound.}} \\ 
0.789 & 0.866 & 0.044 & 0.563 & 0.047 & 0.070 & 0.499 & 0.914 & 0.058 & 0.506 & 0.878 \\ 
\addlinespace 
 \multicolumn{11}{l}{\textbf{4. Complex f}} \\ 
0.852 & 0.873 & 0.021 & 0.064 & 0.107 & 0.020 & 0.053 & 0.957 & 0.021 & 0.056 & 0.920 \\ 
\addlinespace 
 \multicolumn{11}{l}{\textbf{5. Complex f with linear confound.}} \\ 
0.826 & 0.864 & 0.057 & 0.548 & 0.051 & 0.074 & 0.491 & 0.924 & 0.072 & 0.500 & 0.878 \\ 
\addlinespace 
 \multicolumn{11}{l}{\textbf{6. Complex f with linear and non-linear confound.}} \\ 
0.790 & 0.867 & 0.053 & 0.562 & 0.056 & 0.061 & 0.496 & 0.916 & 0.062 & 0.504 & 0.877 \\ 
\bottomrule 
 \end{tabular}
\end{table}

\begin{table}[!htbp]
    \centering
    \caption{ForestRiesz: Regression and Riesz representer $R^2$, bias, RMSE and coverage over 1000 semi-synthetic datasets based on the BHP gasoline price data (10 different random seeds).}\label{tab:BHP_RF}
    \vspace{1em}
    \resizebox{\textwidth}{!}{\begin{tabular}{*{16}{r}} 
\toprule 
&&&& \multicolumn{3}{c}{Direct} & \multicolumn{3}{c}{IPS} & \multicolumn{3}{c}{DR} & \multicolumn{3}{c}{TMLE-DR} \\ 
\cmidrule(lr){5-7} \cmidrule(lr){8-10} \cmidrule(lr){11-13} \cmidrule(lr){14-16} 
x-fit & multit. & reg $R^2$ &  rr $R^2$ &  Bias &  RMSE &  Cov. &  Bias &  RMSE &  Cov. &  Bias &  RMSE &  Cov. &  Bias &  RMSE &  Cov. \\ 
\midrule 
\addlinespace 
 \multicolumn{16}{l}{\textbf{1. Simple f}} \\ 
0 & Yes & 0.952 & 0.491 & 0.133 & 0.134 & 0.000 & 0.148 & 0.149 & 0.000 & 0.043 & 0.050 & 0.424 & 0.003 & 0.028 & 0.844 \\ 
0 & No & 0.960 & 0.491 & 0.112 & 0.113 & 0.000 & 0.148 & 0.149 & 0.000 & 0.031 & 0.039 & 0.701 & -0.005 & 0.028 & 0.844 \\ 
1 & Yes & 0.931 & 0.290 & 0.136 & 0.138 & 0.000 & -0.096 & 0.102 & 0.222 & -0.026 & 0.037 & 0.892 & 0.009 & 0.025 & 0.962 \\ 
1 & No & 0.945 & 0.290 & 0.114 & 0.116 & 0.000 & -0.096 & 0.102 & 0.222 & -0.029 & 0.039 & 0.848 & 0.002 & 0.024 & 0.962 \\ 
2 & No & 0.919 & 0.231 & 0.147 & 0.149 & 0.000 & -0.091 & 0.099 & 0.328 & -0.022 & 0.038 & 0.894 & 0.025 & 0.036 & 0.864 \\ 
\addlinespace 
 \multicolumn{16}{l}{\textbf{2. Simple f with linear confound.}} \\ 
0 & Yes & 0.342 & 0.490 & 0.170 & 0.289 & 0.051 & 0.182 & 0.292 & 0.876 & 0.078 & 0.297 & 0.856 & 0.037 & 0.317 & 0.835 \\ 
0 & No & 0.711 & 0.491 & 0.194 & 0.305 & 0.047 & 0.183 & 0.293 & 0.875 & 0.073 & 0.274 & 0.881 & 0.019 & 0.298 & 0.848 \\ 
1 & Yes & 0.355 & 0.289 & 0.157 & 0.323 & 0.040 & 0.044 & 0.409 & 0.883 & 0.082 & 0.353 & 0.932 & 0.099 & 0.320 & 0.952 \\ 
1 & No & 0.665 & 0.290 & 0.188 & 0.306 & 0.047 & 0.044 & 0.409 & 0.882 & 0.074 & 0.340 & 0.932 & 0.099 & 0.324 & 0.942 \\ 
2 & No & 0.510 & 0.231 & 0.129 & 0.326 & 0.042 & 0.024 & 0.407 & 0.895 & 0.065 & 0.435 & 0.855 & 0.082 & 0.331 & 0.948 \\ 
\addlinespace 
 \multicolumn{16}{l}{\textbf{3. Simple f with linear and non-linear confound.}} \\ 
0 & Yes & 0.330 & 0.490 & 0.154 & 0.283 & 0.051 & 0.163 & 0.284 & 0.888 & 0.057 & 0.296 & 0.862 & 0.014 & 0.320 & 0.842 \\ 
0 & No & 0.684 & 0.491 & 0.168 & 0.295 & 0.055 & 0.164 & 0.284 & 0.888 & 0.048 & 0.274 & 0.884 & -0.005 & 0.302 & 0.850 \\ 
1 & Yes & 0.342 & 0.289 & 0.136 & 0.317 & 0.044 & -0.004 & 0.413 & 0.878 & 0.040 & 0.353 & 0.934 & 0.061 & 0.317 & 0.955 \\ 
1 & No & 0.639 & 0.290 & 0.159 & 0.295 & 0.051 & -0.004 & 0.413 & 0.880 & 0.045 & 0.342 & 0.939 & 0.069 & 0.323 & 0.954 \\ 
2 & No & 0.488 & 0.231 & 0.105 & 0.322 & 0.059 & -0.021 & 0.415 & 0.898 & 0.016 & 0.433 & 0.861 & 0.041 & 0.327 & 0.961 \\ 
\addlinespace 
 \multicolumn{16}{l}{\textbf{4. Complex f}} \\ 
0 & Yes & 0.866 & 0.491 & 0.076 & 0.081 & 0.010 & 0.082 & 0.086 & 0.299 & -0.026 & 0.042 & 0.813 & -0.072 & 0.081 & 0.299 \\ 
0 & No & 0.866 & 0.491 & 0.049 & 0.058 & 0.020 & 0.082 & 0.086 & 0.299 & -0.039 & 0.051 & 0.691 & -0.078 & 0.086 & 0.246 \\ 
1 & Yes & 0.734 & 0.290 & 0.097 & 0.101 & 0.000 & -0.259 & 0.264 & 0.000 & -0.201 & 0.206 & 0.000 & -0.137 & 0.141 & 0.038 \\ 
1 & No & 0.735 & 0.290 & 0.072 & 0.081 & 0.000 & -0.259 & 0.264 & 0.000 & -0.208 & 0.212 & 0.000 & -0.147 & 0.151 & 0.018 \\ 
2 & No & 0.730 & 0.231 & 0.113 & 0.118 & 0.000 & -0.272 & 0.278 & 0.000 & -0.200 & 0.205 & 0.008 & -0.112 & 0.117 & 0.206 \\ 
\addlinespace 
 \multicolumn{16}{l}{\textbf{5. Complex f with linear confound.}} \\ 
0 & Yes & 0.343 & 0.490 & 0.111 & 0.262 & 0.060 & 0.114 & 0.258 & 0.927 & 0.008 & 0.289 & 0.875 & -0.037 & 0.319 & 0.835 \\ 
0 & No & 0.710 & 0.491 & 0.096 & 0.263 & 0.060 & 0.114 & 0.258 & 0.926 & -0.009 & 0.267 & 0.895 & -0.056 & 0.304 & 0.838 \\ 
1 & Yes & 0.356 & 0.289 & 0.115 & 0.308 & 0.049 & -0.113 & 0.426 & 0.856 & -0.089 & 0.361 & 0.916 & -0.044 & 0.316 & 0.959 \\ 
1 & No & 0.663 & 0.290 & 0.098 & 0.268 & 0.064 & -0.113 & 0.426 & 0.855 & -0.106 & 0.351 & 0.931 & -0.061 & 0.320 & 0.951 \\ 
2 & No & 0.509 & 0.231 & 0.094 & 0.315 & 0.056 & -0.150 & 0.446 & 0.860 & -0.109 & 0.441 & 0.845 & -0.052 & 0.327 & 0.952 \\ 
\addlinespace 
 \multicolumn{16}{l}{\textbf{6. Complex f with linear and non-linear confound.}} \\ 
0 & Yes & 0.332 & 0.490 & 0.095 & 0.259 & 0.064 & 0.095 & 0.253 & 0.936 & -0.012 & 0.293 & 0.884 & -0.060 & 0.326 & 0.835 \\ 
0 & No & 0.683 & 0.491 & 0.070 & 0.260 & 0.053 & 0.095 & 0.253 & 0.934 & -0.033 & 0.274 & 0.894 & -0.079 & 0.314 & 0.827 \\ 
1 & Yes & 0.343 & 0.289 & 0.094 & 0.304 & 0.046 & -0.161 & 0.443 & 0.847 & -0.131 & 0.377 & 0.909 & -0.082 & 0.327 & 0.953 \\ 
1 & No & 0.637 & 0.290 & 0.069 & 0.264 & 0.056 & -0.160 & 0.443 & 0.846 & -0.135 & 0.365 & 0.911 & -0.091 & 0.331 & 0.945 \\ 
2 & No & 0.488 & 0.231 & 0.070 & 0.313 & 0.061 & -0.196 & 0.468 & 0.849 & -0.158 & 0.454 & 0.831 & -0.094 & 0.338 & 0.950 \\ 
\bottomrule 
 \end{tabular}}
\end{table}

\begin{table}[!htbp]
    \centering
    \caption{RF Plug-in Benchmark: Regression and Riesz representer $R^2$, bias, RMSE and coverage over 1000 semi-synthetic datasets based on the BHP gasoline price data (10 different random seeds).}\label{tab:plugin_RF}
    \vspace{1em}
    \begin{subtable}{.45\textwidth}
    \begin{tabular}{*{5}{r}} 
\toprule 
&& \multicolumn{3}{c}{RF Plug-in} \\ 
\cmidrule(lr){3-5} 
reg $R^2$ &  rr $R^2$ &  Bias &  RMSE &  Cov. \\ 
\midrule 
\addlinespace 
 \multicolumn{5}{l}{\textbf{1. Simple f}} \\ 
0.960 & 0.491 & 0.043 & 0.051 & 0.535 \\ 
\addlinespace 
 \multicolumn{5}{l}{\textbf{2. Simple f with linear confound.}} \\ 
0.711 & 0.491 & 0.090 & 0.328 & 0.912 \\ 
\addlinespace 
 \multicolumn{5}{l}{\textbf{3. Simple f with linear}} \\
 \multicolumn{5}{l}{\textbf{\quad and non-linear confound.}} \\
0.684 & 0.491 & 0.080 & 0.332 & 0.915 \\ 
\bottomrule 
 \end{tabular}
 
    \end{subtable}
    ~~~~~
    \begin{subtable}{.45\textwidth}
    \begin{tabular}{*{5}{r}} 
\toprule 
&& \multicolumn{3}{c}{RF Plug-in} \\ 
\cmidrule(lr){3-5} 
reg $R^2$ &  rr $R^2$ &  Bias &  RMSE &  Cov. \\ 
\midrule 
\addlinespace 
 \multicolumn{5}{l}{\textbf{4. Complex f}} \\ 
0.866 & 0.491 & 0.030 & 0.048 & 0.791 \\ 
\addlinespace 
 \multicolumn{5}{l}{\textbf{5. Complex f with linear confound.}} \\ 
0.710 & 0.491 & 0.065 & 0.323 & 0.914 \\ 
\addlinespace 
 \multicolumn{5}{l}{\textbf{6. Complex f with linear}} \\
 \multicolumn{5}{l}{\textbf{\quad and non-linear confound.}} \\
0.683 & 0.491 & 0.055 & 0.327 & 0.912 \\ 
\bottomrule 
 \end{tabular}
 
    \end{subtable}
\end{table}
\end{document}